\documentclass{article}

\PassOptionsToPackage{numbers, compress}{natbib}
\usepackage[preprint]{neurips_2026}

\usepackage[utf8]{inputenc}
\usepackage[T1]{fontenc}
\usepackage{hyperref}
\usepackage{url}
\usepackage{booktabs}
\usepackage{amsfonts}
\usepackage{amsmath}
\usepackage{amssymb}
\usepackage{amsthm}
\usepackage{nicefrac}
\usepackage{microtype}
\usepackage[table]{xcolor}
\usepackage{algorithm}
\usepackage{algpseudocode}
\usepackage{graphicx}
\usepackage{subcaption}
\usepackage{multirow}
\usepackage{tikz}
\usetikzlibrary{positioning, arrows.meta, calc}

\newtheorem{theorem}{Theorem}
\newtheorem{proposition}[theorem]{Proposition}

\newtheorem{corollary}[theorem]{Corollary}

\title{Agentic-DPO: From Imitation to Agentic Policy Optimization on Expert Trajectories}

\author{
\textbf{Yixiong Chen},
\textbf{Alan Yuille} \\
Johns Hopkins University\\
\texttt{ychen646@jh.edu}
}

\begin{document}

\maketitle

\begin{abstract}
Large Language Model (LLM) agents are commonly trained from expert trajectories using supervised fine-tuning (SFT), which treats multi-turn agent behavior as ordinary text imitation. This recipe is simple and low-cost, but it only learns to imitate the sequence of expert actions, rather than training the agent to choose the right action against plausible mistakes at each state. Existing methods to mitigate this problem include preference learning or reinforcement learning, but they usually need high-cost environment rollouts and reward models. We propose \textbf{Agentic-DPO}, a lightweight offline agent policy optimization method that turns expert trajectories into state-conditioned preference supervision. At each expert action state, Agentic-DPO samples a one-step action from the current state, treats plausible wrong actions as negatives, and contrasts them with the expert action using a DPO-style preference objective. To avoid mixing both policy and schema in preference learning, we introduce \textbf{Policy-Preserving Augmentation} (PPA), which renders the same latent trajectory under multiple schemas while keeping the expert policy fixed. Agentic-DPO requires no online environment rollout, reward model, or full-trajectory student exploration. We conduct experiments across StableToolBench, $\tau$-bench retail, and Mind2Web, where Agentic-DPO consistently improves agents at different model scales beyond imitation. In particular, it raises $\tau$-bench accuracy from $21.7\%$ (SFT) to $41.4\%$ for a 9B model, matching online GRPO under the same backbone with only step-level rollouts and without environment interaction during gradient steps. The results suggest that expert trajectories can support low-cost agentic policy optimization when converted from demonstrations into state-level action preferences. Code for Agentic-DPO is released at \href{https://github.com/Schuture/Agentic-DPO}{https://github.com/Schuture/Agentic-DPO}.
\end{abstract}

%% ============================================================
\section{Introduction}
%% ============================================================

Large Language Model (LLM) agents are increasingly used to solve tasks that require interacting with tools, users, web pages, or software environments over multiple turns~\citep{toolbench,stabletoolbench,taubench,mind2web}. 
A common way to train such agents is to collect expert trajectories from stronger models or scripted policies, and then apply supervised fine-tuning (SFT) to imitate the full trajectory~\citep{toolace,atlas}. 
This recipe is attractive because it is simple, stable, and does not require online interaction with the environment. 
However, it inherits a basic limitation of behavior cloning: SFT only imitates the token sequences of the expert trajectory. 
% It tells the student what the expert did, but not what the student would have done incorrectly at the same state, nor how its likely mistakes differ from the expert action. 
% This missing contrast is exactly the signal that online reinforcement learning obtains through rollouts: the model acts, observes whether the action fails, and updates against its own behavior.

This limitation is especially important for agent training, because an agent trajectory is not merely a text response; it is a sequence of state-conditioned decisions. 
At each step, the model must decide whether to call a tool or respond to the user, which tool or action to choose, and what arguments or grounding targets to provide. 
Recent work has started to address this mismatch by showing that policy can be learned better by algorithms that utilizes the structure of expert trajectories.
A straightforward way is to improve SFT by selecting only valuable action steps to train, with heuristic rules (SWE-Lego~\citep{swelego}) or the help of external annotation models (ATLaS~\citep{atlas}).
More sophisticated methods combine trajectory structure with preference learning or reinforcement learning (RL) like error step annotation (Step-DPO~\citep{stepdpo}), action grouping (HPL~\citep{hpl}), local on-policy rollouts (PivotRL~\citep{pivotrl}), and test-based failure feedback (Agent-RLVR~\citep{agentrlvr}).
These methods point to an important direction: agent training should use the decision structure inside trajectories, especially where the current policy would fail. 
However, they often recover this signal through costly step selection~\citep{stepdpo,atlas}, executable environments~\citep{pivotrl,agentrlvr}, or even human feedback~\citep{stepdpo}. 
We ask whether expert trajectories alone can be converted into policy-level training signals at much lower cost.

\begin{figure*}[t]
    \centering
    \includegraphics[width=1.0\textwidth]{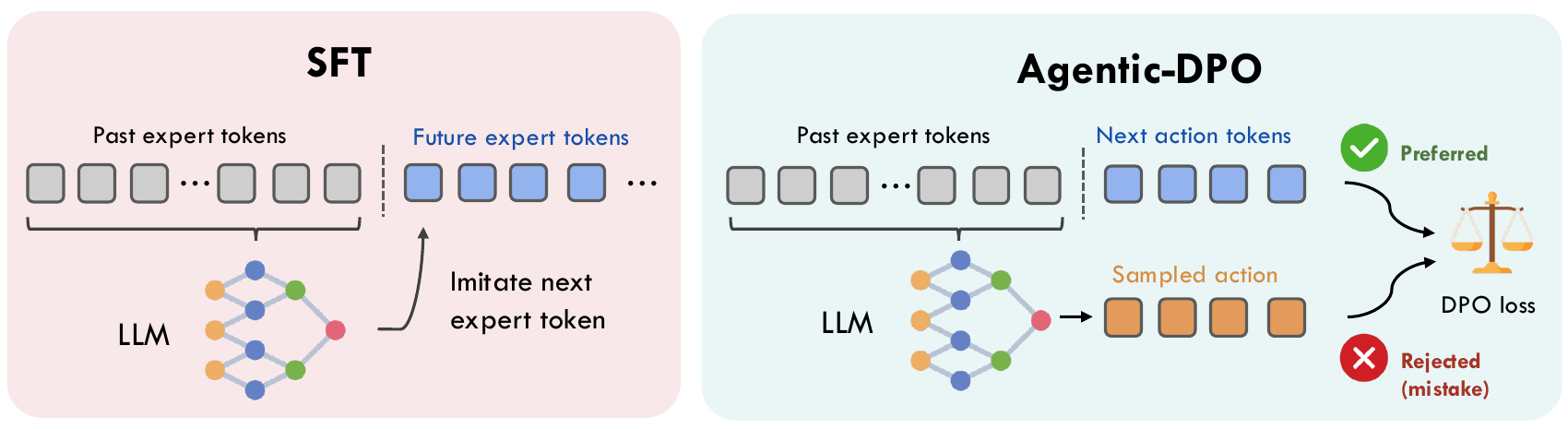}
    \caption{
    Illustration of the difference between SFT and Agentic-DPO for agent training.
    SFT treats an expert trajectory as token-level supervision and trains the model to imitate demonstrated tokens.
    In contrast, Agentic-DPO treats each expert action as a state-conditioned policy decision. It contrasts the expert action with a one-step action sampled from the current student policy, and trains the model to prefer the expert action over the likely student mistake.
    }
    \label{fig:agentic_dpo_overview}
\end{figure*}

We propose \textbf{Agentic-DPO}, an offline agent policy optimization method that turns expert trajectories into state-conditioned action preferences.
Figure~\ref{fig:agentic_dpo_overview} illustrates the core difference between Agentic-DPO and standard SFT.
While SFT treats the expert trajectory as token-level supervision and trains the model to imitate demonstrated tokens, Agentic-DPO treats each expert action as a \emph{local} policy decision.
Given an expert trajectory, we view each expert action as the preferred action at its state.
At the same state, we sample a one-step action from the current agent (namely student in the training) and use plausible wrong actions as negative samples.
The expert action and the student negative then form a preference pair for a preference learning objective. We adopt DPO-style loss~\citep{dpo}.
This gives the student direct feedback about its own likely mistakes: the model is not only trained to imitate the expert action, but also to assign lower probability to the wrong action it would have chosen at that state.
Because the rollout stops after the current action and does not execute the environment, Agentic-DPO avoids online environment interaction, reward-model training, and full-trajectory exploration.

Naively applying Agentic-DPO to agent action strings is still poorly conditioned. 
The action text usually mixes the latent policy decision and the schema-specific rendering of that decision, thus the preferred and dispreferred sequences may differ along both axis, but the DPO loss does not distinguish them.
To address this, Agentic-DPO uses two stabilizing designs. 
First, we keep an SFT anchor so that the model starts from valid expert behavior and does not need to discover the action format through preference learning alone. 
Second, we introduce \textbf{Policy-Preserving Augmentation} (PPA), which renders the same latent trajectory under multiple schemas while keeping the expert policy fixed. 
Across these views, the state-action decision is preserved, but the surface realization changes, making the Agentic-DPO preference direction better aligned with the latent agent policy.

We evaluate Agentic-DPO on three agent benchmarks StableToolBench~\citep{stabletoolbench}, $\tau$-bench retail~\citep{taubench}, and Mind2Web~\citep{mind2web} that cover complementary action spaces and interaction patterns. 
Our results show that Agentic-DPO improves small agents beyond imitation across all three settings. 
On StableToolBench, Agentic-DPO raises a Qwen3.5-2B from $57.1\%$ SFT canonical accuracy to $90.9\%$, surpassing strong agent training baselines.
On $\tau$-bench retail, Agentic-DPO with Qwen3.5-9B reaches $41.4\%$ success, outperforming GRPO under the same backbone ($40.0\%$) without online RL rollouts.
On Mind2Web, it achieves $64.4\%$ average step success across the cross-task, cross-website, and cross-domain held-out splits.
These results suggest that expert trajectories can support strong and low-cost agentic policy optimization with only additional action-level rollout.

Our contributions are summarized as follows: (1) We introduce Agentic-DPO, a lightweight offline policy optimization method that constructs state-conditioned action preferences by contrasting expert actions with one-step student negatives. (2) We introduce Policy-Preserving Augmentation, which renders the same latent trajectory under multiple schemas to reduce schema-dominated preference gradients and stabilize action-level DPO. (3) We show that Agentic-DPO improves small agents across controlled tool use, long-horizon user-interleaved interaction, and web-GUI grounding, without online environment rollouts or reward-model training.

%% ============================================================
\section{Related Work}
%% ============================================================

\paragraph{Trajectory tuning for LLM agents.}
Beyond prompting-based scaffolding~\citep{react,reflexion,guiagents,seeact,scribeagent}, open-source agents are typically trained by supervised fine-tuning over full expert trajectories~\citep{fireact,agenttuning,agentflan,toolbench,toolace,toolacer,agentrefine}. Action-level robustness is added by function masking (Hammer~\citep{hammer}) or critical-step selection (ATLaS~\citep{atlas}, SWE-Lego~\citep{swelego}). Across these methods the training signal is positive-only at the rendered action level: the student is told what the expert did, but not how its own likely mistake at the same state differs from it.

\paragraph{Preference optimization, self-play, and on-policy distillation.}
DPO~\citep{dpo} and its variants (IPO~\citep{ipo}, SimPO~\citep{simpo}, KTO~\citep{kto}, DPOP~\citep{dpop}, BDPO~\citep{bdpo}) optimize chosen--rejected pairs without a reward model. Multi-turn extensions move preference closer to the action level: DMPO~\citep{dmpo} length-normalizes DPO over trajectories, DiaTool-DPO~\citep{diatooldpo} adapts it to tool-augmented dialogue, Step-DPO~\citep{stepdpo} places preference at human/AI-labeled reasoning steps, ETO~\citep{eto} contrasts full success/failure trajectories, and HPL~\citep{hpl} reports failure modes at both step and trajectory level. Self-play (SPIN~\citep{spin}, T-SPIN~\citep{tspin}), AI feedback (Self-Rewarding LM~\citep{selfreward}, RLAIF~\citep{rlaif}), and on-policy distillation~\citep{opd} construct negatives from the model's own generations. Agentic-DPO is the minimal action-level variant: chosen and rejected share the same expert state, the rejected sample is one step from the current student (no per-step labels and no full-trajectory rollout), and no teacher is queried during training.

\paragraph{Reinforcement learning for agents.}
RLHF-style online RL~\citep{instructgpt,ppo,webrl,digirl,webagentr1,ragen,agentgymrl} trains agents through environment rollouts on benchmarks such as WebShop~\citep{webshop}, ALFWorld~\citep{alfworld}, and SWE-bench~\citep{swebench}, requiring reward design and rollout workers; PivotRL~\citep{pivotrl} reduces this cost via informative pivot states. Agentic-DPO is offline and only samples one-step student actions at expert states, recovering part of the self-mistake signal of RL without environment rollouts, reward models, or full-trajectory exploration.

%% ============================================================
\section{Method}
\label{sec:method}
%% ============================================================

Agentic-DPO trains an agent from expert trajectories. 
Algorithm~\ref{alg:agentic_dpo} summarizes the full procedure. 
We next describe the problem setting, the preference objective of Agentic-DPO, and the stabilization design including SFT anchor and PPA.

\begin{algorithm}[t]
\caption{Agentic-DPO}
\label{alg:agentic_dpo}
\begin{algorithmic}[1]
\Require Expert trajectories $\mathcal{D}_{\mathrm{exp}}$, schema views $\Phi$, base policy $\pi_{\theta_0}$, rounds $R$, steps-per-round $T$, batch size $B$, candidates $K$, coefficients $\beta,\alpha,\lambda$
\State SFT-warm-up $\pi_\theta$ on $\mathcal{D}_{\mathrm{exp}}$ with online PPA; freeze $\pi_{\mathrm{ref}}\leftarrow\pi_\theta$
\For{$r=1,\ldots,R$} \Comment{Refresh round}
    \State Snapshot $\theta_{\mathrm{student}}\leftarrow\theta$; sample slice $\mathcal{S}_r\!\subset\!\mathcal{D}_{\mathrm{exp}}$ of size $T\!\cdot\!B$ \Comment{just enough for this round}
    \State $\mathcal{D}_{\mathrm{pref}}^{(r)}\!\leftarrow\!\emptyset$
    \For{each $(s_t,u_t^+)\in\mathcal{S}_r$}
        \State Sample $\phi\!\sim\!\mathrm{Unif}(\Phi)$, render $s_t^\phi$, draw $K$ candidates from $\pi_{\theta_{\mathrm{student}}}(\cdot\mid s_t^\phi)$
        \State drop expert-equivalent; pick the highest-log-prob $u_t^-$ and append $(s_t,u_t^+,u_t^-)$ to $\mathcal{D}_{\mathrm{pref}}^{(r)}$
    \EndFor
    \For{$T$ gradient steps} \Comment{Model update with PPA}
        \State Minibatch from $\mathcal{D}_{\mathrm{pref}}^{(r)}$; for each triple sample $\psi\!\sim\!\mathrm{Unif}(\Phi)$ and render $(s_t^\psi,a_t^{+,\psi},a_t^{-,\psi})$
        \State Update $\theta$ with $\mathcal{L}_{\mathrm{train}}=\mathcal{L}_{\mathrm{ADPO}}+\lambda\mathcal{L}_{\mathrm{SFT}}$
    \EndFor
\EndFor
\State \Return $\pi_\theta$
\end{algorithmic}
\end{algorithm}

\subsection{Problem Setting}
\label{sec:problem}

The input to Agentic-DPO is an expert trajectory dataset
\[
\mathcal{D}_{\mathrm{exp}}
=
\{\tau_i\}_{i=1}^{N},
\qquad
\tau_i=\{(s_{i,t},u_{i,t}^+)\}_{t=1}^{T_i}.
\]
Here \(s_{i,t}\) is the agent state at step \(t\), including the system instruction, conversation history, available tools or actions, previous observations, and any task-specific context. 
The latent expert action \(u_{i,t}^+\) is the decision made by the expert policy at that state. 
We assume access only to these expert trajectories and do not execute the environment or call external model during training.

A rendered action is produced from a latent action through a schema view. 
Let \(\phi \in \Phi\) denote a schema view. 
Each view defines a state renderer $g_\phi$ and an action renderer $r_\phi$:
\[
s_t^\phi = g_\phi(s_t),
\qquad
a_t^\phi = r_\phi(u_t).
\]
For example, the same latent tool call can be rendered as a ReAct-style action or a JSON-list function call, as long as the underlying decision is unchanged. 
This distinction is important because agent actions are not plain text labels: the same latent decision may have multiple valid textual forms.

We train a student policy \(\pi_\theta(a \mid s)\), which assigns probabilities to rendered action strings. 
The SFT objective optimizes
\[
\mathcal{L}_{\mathrm{SFT}}(\theta)
=
-
\mathbb{E}_{(s_t,u_t^+),\,\phi}
\left[
\log \pi_\theta(r_\phi(u_t^+) \mid g_\phi(s_t))
\right].
\]
SFT gives the student positive examples of expert behavior, but it does not compare the expert action with actions that the current student would choose at the same state. 
Agentic-DPO adds this missing contrast by constructing preference triples $(s_t,u_t^+,u_t^-)$, where \(u_t^+\) is the expert latent action and \(u_t^-\) is a one-step negative action sampled from the current student at the same state. 
The negative is generated without taking the action in the environment, so the method stays offline.

\subsection{Agentic-DPO: State-Conditioned Action Preferences}
\label{sec:agentic_dpo}

At the beginning of each refresh round, Agentic-DPO uses the current student as the negative sampler. 
For an expert step \((s_t,u_t^+)\), we first sample a schema view \(\phi\), render the state as \(s_t^\phi\), and sample \(K=4\) one-step candidate actions from the current student: $\{\tilde a_t^{(k),\phi}\}_{k=1}^{K} \sim \pi_{\theta_{\mathrm{student}}}(\cdot \mid s_t^\phi).$
We parse each valid candidate into a latent action \(\tilde u_t^{(k)}\). 
Candidates that cannot be parsed or equivalent to the expert action are removed, where equivalence means that the latent decision is the same, such as the same tool and the same arguments after normalization. 
If no different valid candidate remains, the expert step is skipped in the current refresh round. 
This avoids turning the preference signal into a decision-irrelevant comparison, such as formatting and argument order.

Among the remaining candidates, we select the one with the highest student log probability:
\[
u_t^-
=
\arg\max_{\tilde u_t^{(k)} \neq u_t^+}
\log \pi_{\theta_{\mathrm{student}}}
\left(
r_\phi(\tilde u_t^{(k)}) \mid g_\phi(s_t)
\right).
\]
This hardest-different-action rule selects a negative that the current student is likely to make, while still requiring that the negative correspond to a different latent decision from the expert. 
The resulting triples form the preference buffer \(\mathcal{D}_{\mathrm{pref}}\).

Given a rendered preference triple \((s_t^\phi,a_t^{+,\phi},a_t^{-,\phi})\), Agentic-DPO optimizes the action-level DPO loss
\begin{equation}
\mathcal{L}_{\mathrm{ADPO}}(\theta)
=
-
\mathbb{E}_{(s_t,u_t^+,u_t^-),\,\phi}
\left[
\log \sigma
\left(
\beta_{\mathrm{eff}}(a_t^{+,\phi},a_t^{-,\phi})
\left[
\log \frac{\pi_\theta(a_t^{+,\phi} \mid s_t^\phi)}
{\pi_\theta(a_t^{-,\phi} \mid s_t^\phi)}
-
\log \frac{\pi_{\mathrm{ref}}(a_t^{+,\phi} \mid s_t^\phi)}
{\pi_{\mathrm{ref}}(a_t^{-,\phi} \mid s_t^\phi)}
\right]
\right)
\right].
\label{eq:agentic_dpo}
\end{equation}
The reference policy \(\pi_{\mathrm{ref}}\) is the frozen SFT-warmed student. 
All log probabilities are summed only over the generated action tokens, not over the state tokens. 
Because action lengths can differ across tools, GUI actions, and natural-language replies, we use a length-scaled preference coefficient
\[
\beta_{\mathrm{eff}}(a^+,a^-)
=
\frac{\beta}{\max(|a^+|,|a^-|)^\alpha},
\qquad
\alpha=0.5.
\label{eq:length_scaled_beta}
\]
Here \(|a|\) is the number of action tokens. 
The scalar \(\beta\) is fixed across runs, while the length scaling prevents long actions from dominating short actions only because they contain more tokens.

Unlike trajectory-level preference learning, both the chosen and rejected actions share the same expert state. 
The loss therefore does not ask the model to prefer one full trajectory over another. 
It asks the model to prefer the expert action over a specific wrong action that the current student is likely to take at that state.

\subsection{Stabilizing Agentic-DPO}
\label{sec:stabilization}

A direct application of Eq.~\eqref{eq:agentic_dpo} can be unstable because agent actions are represented as rendered text. 
A preference pair may differ both in the latent decision and in the schema-specific surface form. 
If this is not controlled, the DPO gradient can partly optimize for the rendering style rather than the action decision. 
Agentic-DPO uses SFT anchor and PPA.

\paragraph{SFT warm-up and anchor.}
The SFT warm-up gives the student basic format validity and expert-action coverage before any preference optimization. 
After warm-up, we freeze the reference policy \(\pi_{\mathrm{ref}}\), and then use the current student in each refresh round to generate negatives. 
During preference training, we keep an SFT anchor:
\[
\mathcal{L}_{\mathrm{train}}(\theta)
=
\mathcal{L}_{\mathrm{ADPO}}(\theta)
+
\lambda
\mathcal{L}_{\mathrm{SFT}}(\theta),
\label{eq:agentic_dpo_anchor}
\]
where \(\lambda>0\). 
The anchor keeps the expert action probability from being pushed down when the preference loss suppresses a negative action. 
It also keeps the policy close to valid action syntax while the DPO term changes the relative preference between expert and student actions.

\paragraph{Policy-Preserving Augmentation.}
PPA renders the same latent decision under multiple schema views. 
PPA is applied to both the expert action and the selected student negative, so each preference pair stays schema-consistent:
\[
(s_t^\phi,a_t^{+,\phi},a_t^{-,\phi})
=
(g_\phi(s_t),r_\phi(u_t^+),r_\phi(u_t^-)).
\]

The first PPA family is \emph{action-rendering rewrites}. 
These transformations change how actions are serialized while preserving the action choice. 
For tool-use agents, we render the same tool call in formats such as ReAct-style calls and JSON-list calls. 
We also use tool renamings, where the tool names and tool descriptions in the state are renamed consistently with the action string. 

The second PPA family is \emph{state-context rewrites}. 
These transformations change the rendered history before the current expert step while preserving the current target action. 
In recovery-style contexts, we inject an earlier erroneous action and its correction into the history, and then ask the model to choose the same current expert action. 
This creates states where the agent has already recovered from a local mistake, but the latent decision at the current step remains fixed.

During training, PPA is sampled online. 
For each preference triple in a minibatch, we sample a view \(\phi \sim \mathrm{Unif}(\Phi)\), render the state, expert action, and negative action under that same view, and compute Eq.~\eqref{eq:agentic_dpo}. 
Across updates, the same latent preference can be seen under several schemas. 
Detailed examples of PPA can be found in Appendix \ref{app:ppa_examples}.

\subsection{Why the Stabilizers Help}
\label{sec:theory}

We give two local observations that motivate the SFT anchor and PPA. Full assumptions, proofs, the hard-negative extension, and refresh-round diagnostics are given in Appendix~\ref{app:proofs}.

\paragraph{The SFT anchor is aligned with Agentic-DPO.}
Fix an expert state \(s\) with a finite latent action set \(\mathcal{A}(s)\), logits \(z_\theta(s)\), and policy \(\pi_\theta(\cdot\mid s)=\mathrm{softmax}(z_\theta(s))\). 
Let \(q_s\) be the expert action distribution, and let \(\pi_{\mathrm{ref}}\) be the frozen SFT-warmed reference policy with \(\pi_{\theta_0}=\pi_{\mathrm{ref}}\). 
For \(a^+\sim q_s\) and \(a^-\sim\pi_{\mathrm{ref}}(\cdot\mid s)\), let \(\mathcal{L}_{\mathrm{con}}\) be the per-state DPO contrastive loss and \(\mathcal{L}_{\mathrm{SFT}}\) be the per-state SFT loss.

\begin{proposition}[Local SFT-DPO alignment]
\label{thm:local_alignment}
At initialization \(\pi_{\theta_0}=\pi_{\mathrm{ref}}\), the contrastive and SFT gradients in logit space satisfy
\begin{equation}
\nabla_{z_{\theta_0}(s)}\mathcal{L}_{\mathrm{con}}(\theta_0;s)
=
\tfrac{\beta}{2}
\bigl(
\pi_{\mathrm{ref}}(\cdot\mid s)-q_s
\bigr),
\qquad
\nabla_{z_{\theta_0}(s)}\mathcal{L}_{\mathrm{SFT}}(\theta_0;s)
=
\pi_{\mathrm{ref}}(\cdot\mid s)-q_s.
\label{eq:local_alignment}
\end{equation}
Therefore, the anchored objective \(\mathcal{L}_{\mathrm{con}}+\lambda\mathcal{L}_{\mathrm{SFT}}\) has gradient
\[
\left(\lambda+\tfrac{\beta}{2}\right)
\bigl(
\pi_{\mathrm{ref}}(\cdot\mid s)-q_s
\bigr).
\]
\end{proposition}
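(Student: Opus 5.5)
The plan is to write the per-state contrastive loss explicitly as
\[
\mathcal{L}_{\mathrm{con}}(\theta;s)=-\mathbb{E}_{a^+\sim q_s,\,a^-\sim\pi_{\mathrm{ref}}(\cdot\mid s)}\bigl[\log\sigma\bigl(\beta\,\Delta_\theta(a^+,a^-)\bigr)\bigr],
\]
with
\[
\Delta_\theta(a^+,a^-)=\bigl(z_\theta(a^+)-z_\theta(a^-)\bigr)-\bigl(z_{\mathrm{ref}}(a^+)-z_{\mathrm{ref}}(a^-)\bigr).
\]
I use the fact that the log-partition cancels in a log-ratio of softmax probabilities, so \(\log\pi_\theta(a^+\mid s)-\log\pi_\theta(a^-\mid s)=z_\theta(a^+)-z_\theta(a^-)\). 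For this local statement I take \(\beta_{\mathrm{eff}}=\beta\), treating actions as atomic latent actions so that the length scaling is absorbed into \(\beta\). I treat \(q_s\) and the negative sampler \(\pi_{\mathrm{ref}}\) as fixed, not differentiated.

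\textbf{Contrastive gradient.} Differentiate inside the expectation. For each pair,
\[
\nabla_z\bigl[-\log\sigma(\beta\Delta)\bigr]=-\beta\,\sigma(-\beta\Delta)\,(e_{a^+}-e_{a^-}).
\]
At \(\theta_0\) with \(\pi_{\theta_0}=\pi_{\mathrm{ref}}\), the logits agree up to an additive constant, so \(\Delta_{\theta_0}\equiv 0\) and \(\sigma(0)=\tfrac12\). Every pair therefore contributes \(-\tfrac{\beta}{2}(e_{a^+}-e_{a^-})\). Taking the expectation gives
\[
-\tfrac{\beta}{2}\bigl(\mathbb{E}_{q_s}e_{a^+}-\mathbb{E}_{\pi_{\mathrm{ref}}}e_{a^-}\bigr)=\tfrac{\beta}{2}\bigl(\pi_{\mathrm{ref}}(\cdot\mid s)-q_s\bigr).
\]
One caveat: pairs with \(a^+=a^-\) contribute zero either way, so including or excluding them is harmless at this level. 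Strictly, the paper's rejection of expert-equivalent negatives would change the sampling distribution of \(a^-\). I will state the proposition for the unfiltered sampler \(a^-\sim\pi_{\mathrm{ref}}\), as written, and defer the filtered and hard-negative versions to the extension mentioned in the appendix.

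\textbf{SFT gradient.} Here \(\mathcal{L}_{\mathrm{SFT}}=-\mathbb{E}_{q_s}\log\mathrm{softmax}(z)_{a^+}\). The gradient of the log-softmax gives \(\nabla_z=\mathbb{E}_{q_s}(\pi_\theta-e_{a^+})=\pi_\theta-q_s\). Evaluating at \(\theta_0\) yields \(\pi_{\mathrm{ref}}-q_s\).

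\textbf{Anchored objective.} Linearity of the gradient gives the combined expression \(\bigl(\lambda+\tfrac{\beta}{2}\bigr)\bigl(\pi_{\mathrm{ref}}(\cdot\mid s)-q_s\bigr)\).

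The only real obstacle is being precise about the setup rather than the calculus. Three points need care: first, that \(\Delta_{\theta_0}=0\) holds identically, which needs \(\pi_{\theta_0}=\pi_{\mathrm{ref}}\) exactly and uses logit shift-invariance; second, that the sampling distributions are not differentiated through; third, that \(\beta_{\mathrm{eff}}\) reduces to a constant \(\beta\) in this idealized finite-action model. I will state these as standing assumptions before the computation.
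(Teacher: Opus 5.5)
Your proposal is correct and follows essentially the same route as the paper, whose argument (invoked in the proof of the hard-negative extension) is that the reference-subtracted margin vanishes at $\pi_{\theta_0}=\pi_{\mathrm{ref}}$, so each pair contributes $-\tfrac{\beta}{2}(e_{a^+}-e_{a^-})$. This averages to $\tfrac{\beta}{2}(\pi_{\mathrm{ref}}(\cdot\mid s)-q_s)$, and the SFT gradient comes from the softmax cross-entropy identity; your explicit points about logit shift-invariance, not differentiating through the samplers, and using the unfiltered $a^-\sim\pi_{\mathrm{ref}}$ state assumptions the paper leaves implicit.
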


At initialization, the SFT anchor and the Agentic-DPO contrastive term push probability mass in the same logit-space direction.
Thus the anchor does not oppose preference optimization; it reinforces the expert-directed update while preventing expert-action likelihood from being displaced. 
A sufficiently small step along the anchored gradient gives a first-order decrease of the expert KL \(V_s(z)=\mathrm{KL}(q_s\Vert\pi_z(\cdot\mid s))\), and Appendix~\ref{app:proofs} extends the statement to the hard-negative rule used in Algorithm~\ref{alg:agentic_dpo}.

\paragraph{PPA averages out schema-form gradients.}
For a policy-preserving view \(\phi\), write the stylized decomposition
\[
\log\pi_\theta(r_\phi(u)\mid g_\phi(s))
=
G_\theta(u\mid s)
+
H_\theta(\phi\mid s,u)
+
\epsilon_\theta,
\]
where \(G_\theta\) scores the latent decision, \(H_\theta\) scores schema-specific form, and \(\epsilon_\theta\) is a residual independent of \(u\) and \(\phi\).
The corresponding Agentic-DPO margin is
\[
\Delta_\theta^\phi
=
\underbrace{
G_\theta(u^+\mid s)-G_\theta(u^-\mid s)
}_{\text{latent-decision margin}}
+
\underbrace{
H_\theta(\phi\mid s,u^+)-H_\theta(\phi\mid s,u^-)
}_{\text{schema-form margin}}
+
\epsilon_\theta.
\]
With a single schema, the schema-form margin can dominate the gradient, making DPO prefer a rendering style rather than the latent expert action. 
PPA instead samples multiple views for the same latent preference pair. 
If these views are balanced so that
\[
\mathbb{E}_{\phi\sim\Phi}
\left[
\nabla_\theta
\bigl(
H_\theta(\phi\mid s,u^+)-H_\theta(\phi\mid s,u^-)
\bigr)
\right]
\approx 0,
\]
then the expected preference gradient is dominated by the shared latent-decision margin. 
This is why PPA uses both action-rendering rewrites and state-context rewrites: the surface form and local history vary across views, while the expert-over-negative latent decision stays fixed.
Together, these two observations predict that removing either the SFT anchor or PPA should produce visible regressions in seed stability and the canonical--perturbation gap, which we test in Section~\ref{sec:ablation}.

%% ============================================================
\section{Experiments}
\label{sec:experiments}
%% ============================================================

\subsection{Experimental Setup}
\label{sec:setup}

We evaluate \textbf{Agentic-DPO} on three agent benchmarks: StableToolBench~\citep{stabletoolbench} for tool use, $\tau$-bench retail~\citep{taubench} for long-horizon user-interleaved tool interaction, and Mind2Web~\citep{mind2web} for web-GUI grounding.
We report canonical and perturbed accuracy on StableToolBench (perturbation set in Appendix~\ref{app:stb_perturbed}), task success on the original released $\tau$-bench retail tasks, and average step success across the cross-task, cross-website, and cross-domain held-out splits of Mind2Web. All results are averaged over three random seeds.

Backbones are Qwen3.5-2B/4B/9B~\citep{qwen3}, plus Gemma3-4B~\citep{gemma3} in ablations. We compare against SFT, PPA+SFT, DFT~\citep{dft} (a reward-rectified SFT variant), ETO~\citep{eto} (trajectory-level DPO), and GRPO~\citep{grpo} (online RL with grouped rollouts) on a shared backbone, expert traces, and evaluation protocol per benchmark; GRPO is run only on StableToolBench and $\tau$-bench retail (Mind2Web's static traces lack the step-level reward GRPO needs). Agentic-DPO is trained on 4 A6000 GPUs with hyperparameters $\lambda=0.5$, $\beta=0.008$, $K=4$ student samples, $R=5$ negative-refresh rounds, and effective batch size $16$ across all benchmarks and backbones; the remaining loss-coefficient details and the per-round dynamics are reported in Appendix and Section~\ref{sec:refresh_dynamics}.

\subsection{Main Results Across Agent Benchmarks}
\label{sec:main_results}

Table~\ref{tab:main_results} compares Agentic-DPO with baselines across the three benchmarks and three model sizes.

\begin{table}[!t]
\centering
\caption{
Performance comparison of Agentic-DPO and baselines across agent benchmarks.
The best and second-best results within each model group are highlighted in \textbf{bold} and with an \underline{underline}.
}
\label{tab:main_results}
\renewcommand{\arraystretch}{0.6}
\resizebox{\textwidth}{!}{
\footnotesize
\begin{tabular}{lccccc}
\toprule
\multirow{2}{*}{\textbf{Method}}
&
\multicolumn{2}{c}{\textbf{StableToolBench}}
&
\textbf{$\tau$-bench Retail}
&
\textbf{Mind2Web}
&
\multirow{2}{*}{\textbf{Average}}
\\
\cmidrule(lr){2-3}
&
\textbf{Canon}
&
\textbf{Pert}
&
\textbf{Success Rate}
&
\textbf{Avg. Step Success}
&
\\
\midrule

\multicolumn{6}{l}{\textbf{Qwen3.5-2B}} \\
~~Base
& 45.2$_{\pm2.1}$ & 42.1$_{\pm2.3}$ & 12.5$_{\pm1.5}$ & 25.6$_{\pm1.8}$ & 31.4 \\
~~SFT
& 57.1$_{\pm3.2}$ & 52.3$_{\pm3.0}$ & 18.0$_{\pm2.0}$ & 35.4$_{\pm2.2}$ & 40.7 \\
~~PPA+SFT
& 83.4$_{\pm2.5}$ & 81.1$_{\pm2.8}$ & 22.1$_{\pm3.1}$ & 42.1$_{\pm2.5}$ & 57.2 \\
~~DFT
& 85.2$_{\pm6.5}$ & 79.6$_{\pm6.6}$ & 25.4$_{\pm2.8}$ & \underline{49.2}$_{\pm2.7}$ & 59.9 \\
~~ETO
& \underline{85.3}$_{\pm2.2}$ & \underline{81.3}$_{\pm2.7}$ & 28.5$_{\pm2.2}$ & 48.5$_{\pm2.4}$ & 60.9 \\
~~GRPO
& 85.0$_{\pm2.9}$ & 79.8$_{\pm4.2}$ & \underline{32.1}$_{\pm1.9}$ & -- & -- \\
\rowcolor{blue!8}
~~Agentic-DPO
& \textbf{90.9}$_{\pm1.3}$ & \textbf{85.5}$_{\pm1.5}$ & \textbf{35.2}$_{\pm1.6}$ & \textbf{52.3}$_{\pm1.8}$ & \textbf{66.0} \\

\midrule
\multicolumn{6}{l}{\textbf{Qwen3.5-4B}} \\
~~Base
& 50.5$_{\pm2.2}$ & 48.2$_{\pm2.4}$ & 14.5$_{\pm1.6}$ & 28.4$_{\pm1.9}$ & 35.4 \\
~~SFT
& 65.2$_{\pm2.8}$ & 60.5$_{\pm2.9}$ & 20.4$_{\pm2.1}$ & 38.6$_{\pm2.3}$ & 46.2 \\
~~PPA+SFT
& 82.0$_{\pm2.1}$ & 77.0$_{\pm8.5}$ & 26.5$_{\pm3.2}$ & 46.1$_{\pm2.8}$ & 57.9 \\
~~DFT
& 85.4$_{\pm1.8}$ & 81.2$_{\pm2.5}$ & 29.8$_{\pm2.1}$ & 49.0$_{\pm2.4}$ & 61.4 \\
~~ETO
& \underline{88.5}$_{\pm1.5}$ & \underline{85.1}$_{\pm2.0}$ & 32.6$_{\pm1.9}$ & \underline{50.5}$_{\pm2.2}$ & 64.2 \\
~~GRPO
& 86.4$_{\pm2.3}$ & 82.1$_{\pm5.3}$ & \underline{35.4}$_{\pm1.8}$ & -- & -- \\
\rowcolor{blue!8}
~~Agentic-DPO
& \textbf{91.3}$_{\pm1.0}$ & \textbf{89.4}$_{\pm1.2}$ & \textbf{38.5}$_{\pm1.4}$ & \textbf{54.5}$_{\pm9.0}$ & \textbf{68.4} \\

\midrule
\multicolumn{6}{l}{\textbf{Qwen3.5-9B}} \\
~~Base
& 65.2$_{\pm2.3}$ & 62.1$_{\pm2.5}$ & 16.5$_{\pm1.8}$ & 35.2$_{\pm2.1}$ & 44.8 \\
~~SFT
& 78.5$_{\pm2.5}$ & 74.2$_{\pm2.6}$ & 21.7$_{\pm2.2}$ & 45.6$_{\pm2.4}$ & 55.0 \\
~~PPA+SFT
& 92.2$_{\pm1.6}$ & 90.1$_{\pm0.7}$ & 23.8$_{\pm4.2}$ & 52.4$_{\pm2.1}$ & 64.6 \\
~~DFT
& \underline{93.0}$_{\pm1.8}$ & 90.7$_{\pm2.3}$ & 33.0$_{\pm3.0}$ & 55.6$_{\pm1.8}$ & 67.9 \\
~~ETO
& 92.5$_{\pm2.1}$ & \underline{91.1}$_{\pm1.9}$ & 20.9$_{\pm11.8}$ & \underline{56.8}$_{\pm2.5}$ & 65.5 \\
~~GRPO
& 88.6$_{\pm1.1}$ & 84.9$_{\pm3.0}$ & \underline{40.0}$_{\pm2.5}$ & -- & -- \\
\rowcolor{blue!8}
~~Agentic-DPO
& \textbf{94.1}$_{\pm0.1}$ & \textbf{92.0}$_{\pm0.1}$ & \textbf{41.4}$_{\pm3.5}$ & \textbf{64.4}$_{\pm1.2}$ & \textbf{73.0} \\

\bottomrule
\end{tabular}
}
\end{table}

Agentic-DPO is the best method on every metric and scale, with the largest gains over plain SFT (avg.\ $+25.3$/$+22.2$/$+18.0$ on 2B/4B/9B). It also consistently improves over PPA+SFT, isolating the contribution of expert-vs-student-mistake contrast beyond multi-view positives.
On StableToolBench, GRPO improves over SFT but trails ETO and Agentic-DPO at every scale, since on many states the group of $G=4$ rollouts shares the same correctness label and yields zero advantage signal.
On $\tau$-bench retail, where the agent interacts with a user simulator over long horizons, Agentic-DPO lifts Qwen3.5-9B from $21.7\%$ to $41.4\%$ success and matches GRPO ($40.0\%$) with overlapping confidence intervals at a fraction of its rollout cost (Section~\ref{sec:discussion}); ETO's $\pm11.8$ standard deviation at this scale reflects the high variance of full-trajectory preference learning, which Agentic-DPO sidesteps by anchoring each pair to a single expert state.
On Mind2Web, Agentic-DPO gives the largest relative gain at 9B, raising average step success from $45.6\%$ (SFT) and $56.8\%$ (ETO) to $64.4\%$.

\subsection{Ablation Study}
\label{sec:ablation}

Table~\ref{tab:ablation} ablates the four stabilizing components on StableToolBench for Qwen3.5-2B and Gemma3-4B. The \textbf{SFT warm-up} is load-bearing: removing it drops accuracy to $0.0\%$ on both backbones, reflecting a $100\%$ parser-rejection floor where student samples drift away from the action schema and produce ungradeable strings before any preference signal can act. With the warm-up in place, removing the \textbf{SFT anchor} during preference training partially recovers ($72.5$/$62.5$ canonical on 2B/Gemma) but trails the full recipe by $14$--$18$ points and shows higher seed variance, indicating the contrastive term alone is not enough to stabilize action-level DPO. Removing either \textbf{PPA family} keeps canonical accuracy high but widens the canonical--perturbation gap from $\sim$4 points (full recipe) to $11$--$15$ points; action-rendering PPA contributes slightly more to perturbation robustness than recovery-context PPA, and removing both collapses the gap to the same level. Only the full recipe combines high canonical accuracy with a small perturbation gap on both backbones.

\begin{table}[t]
\centering
\caption{
Ablation study on StableToolBench. 
The full Agentic-DPO recipe uses SFT warm-up, an SFT anchor during DPO training, action-rendering PPA, and recovery-context PPA.
}
\label{tab:ablation}
\renewcommand{\arraystretch}{0.9} 
\resizebox{\textwidth}{!}{
\small
\begin{tabular}{lcccccc}
\toprule
\multirow{2}{*}{\textbf{Method}}
&
\multicolumn{3}{c}{\textbf{Qwen3.5-2B}}
&
\multicolumn{3}{c}{\textbf{Gemma3-4B}}
\\
\cmidrule(lr){2-4}
\cmidrule(lr){5-7}
&
\textbf{Canonical}
&
\textbf{Perturb.}
&
\textbf{Gap}
&
\textbf{Canonical}
&
\textbf{Perturb.}
&
\textbf{Gap}
\\
\midrule
SFT
& 57.1$_{\pm3.2}$ & 52.3$_{\pm3.0}$ & 4.8
& 65.4$_{\pm3.5}$ & 50.1$_{\pm3.8}$ & 15.3 \\
PPA+SFT
& 83.4$_{\pm2.5}$ & 81.1$_{\pm2.8}$ & \phantom{0}2.3
& 72.4$_{\pm2.8}$ & 70.1$_{\pm3.1}$ & \phantom{0}2.3 \\
Agentic-DPO w/o SFT warm-up
& \phantom{0}0.0$_{\pm0.0}$ & \phantom{0}0.0$_{\pm0.0}$ & \phantom{0}0.0
& \phantom{0}0.0$_{\pm0.0}$ & \phantom{0}0.0$_{\pm0.0}$ & \phantom{0}0.0 \\
Agentic-DPO w/o SFT anchor
& 72.5$_{\pm4.2}$ & 68.2$_{\pm4.5}$ & \phantom{0}4.3
& 62.5$_{\pm4.8}$ & 58.1$_{\pm5.2}$ & \phantom{0}4.4 \\
Agentic-DPO w/o action-rendering PPA
& 88.5$_{\pm2.1}$ & 75.3$_{\pm2.9}$ & 13.2
& 75.8$_{\pm2.4}$ & 61.2$_{\pm3.3}$ & 14.6 \\
Agentic-DPO w/o recovery-context PPA
& 89.2$_{\pm1.9}$ & 78.5$_{\pm2.6}$ & 10.7
& 76.1$_{\pm2.2}$ & 65.3$_{\pm2.9}$ & 10.8 \\
Agentic-DPO w/o PPA
& 88.0$_{\pm1.4}$ & 76.6$_{\pm1.9}$ & 11.4
& 74.4$_{\pm3.1}$ & 64.0$_{\pm3.9}$ & 10.4 \\
\rowcolor{blue!8}
Full Agentic-DPO
& 90.9$_{\pm1.3}$ & 85.5$_{\pm1.5}$ & \phantom{0}5.4
& 76.5$_{\pm1.8}$ & 72.8$_{\pm2.1}$ & \phantom{0}3.7 \\
\bottomrule
\end{tabular}
}
\end{table}

\subsection{Robustness and Out-of-Distribution Generalization}
\label{sec:robustness_holdout}

We test whether the gains come from learning the training-time PPA views or from improving the latent action policy. StableToolBench-Perturbed has nine evaluation operators; two (\texttt{rename\_tool}, \texttt{format\_switch}) overlap with training-time PPA and seven are held out (full list in Appendix~\ref{app:stb_perturbed}).
As Table~\ref{tab:robustness_holdout} shows, Agentic-DPO improves both in-class and held-out accuracy with only a $2.7$-point gap between them, so the robustness gain does not collapse on perturbations absent from training-time PPA; the much smaller seed standard deviations also indicate that the anchored objective stabilizes training across seeds.
Table~\ref{tab:bfcl_ood} tests zero-shot OOD generalization: a Qwen3.5-9B agent trained on the publicly released ToolACE~\citep{toolace} corpus and evaluated on BFCL-v3~\citep{bfcl}. SFT slightly hurts the frozen base model on the Live split, while Agentic-DPO improves both Non-Live and Live scores, matching the held-out finding that PPA-stabilized preference optimization preserves OOD tool-use better than positive-only SFT.

\begin{table}[t]
\centering
\begin{minipage}[t]{0.52\linewidth}
\centering
\captionof{table}{
Held-out robustness on StableToolBench-Perturbed with Qwen3.5-2B. 
In-class uses the 2 perturbation operators related to PPA, while held-out uses the 7 operators not used by PPA.
}
\label{tab:robustness_holdout}
\renewcommand{\arraystretch}{0.95}
\small
\begin{tabular}{lcc}
\toprule
\textbf{Method} & \textbf{In-class} & \textbf{Held-out} \\
\midrule
SFT          & 53.2$_{\pm 11.5}$ & 52.3$_{\pm 15.2}$ \\
PPA+SFT      & 61.7$_{\pm 25.3}$ & 61.2$_{\pm 21.9}$ \\
\rowcolor{blue!8}
Agentic-DPO  & \textbf{89.6}$_{\pm 3.6}$ & \textbf{86.9}$_{\pm 3.0}$ \\
\bottomrule
\end{tabular}
\end{minipage}
\hfill
\begin{minipage}[t]{0.42\linewidth}
\centering
\captionof{table}{
Zero-shot BFCL-v3 evaluation for Qwen3.5-9B trained on ToolACE.
}
\label{tab:bfcl_ood}
\renewcommand{\arraystretch}{0.95}
\small
\begin{tabular}{lcc}
\toprule
\textbf{Method} & \textbf{Non-Live} & \textbf{Live} \\
\midrule
Base          & 82.66 & 56.11 \\
SFT           & 81.87 & 51.54 \\
\rowcolor{blue!8}
Agentic-DPO   & \textbf{84.17} & \textbf{57.18} \\
\bottomrule
\end{tabular}
\end{minipage}
\end{table}

\subsection{Scaling Up: Sampled Negatives and Expert Data}
\label{sec:scaling_up}

\begin{figure}[t]
\centering
\begin{minipage}[c]{0.49\linewidth}
\centering
\includegraphics[width=\linewidth]{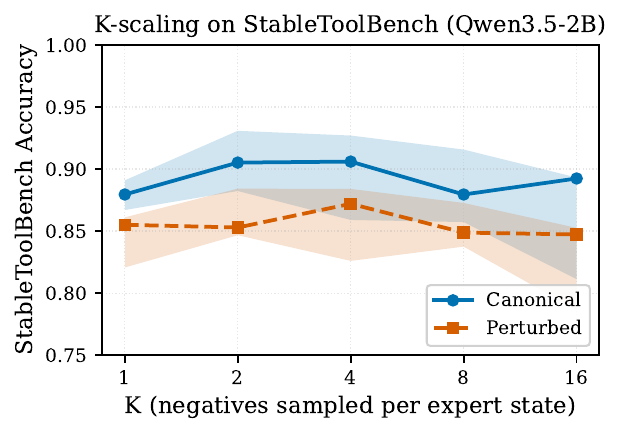}
\end{minipage}\hfill
\begin{minipage}[c]{0.49\linewidth}
\centering
\includegraphics[width=\linewidth]{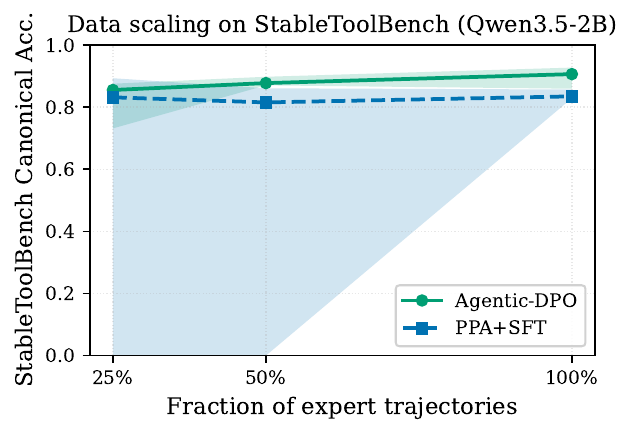}
\end{minipage}
\caption{
Scaling behavior of Agentic-DPO on StableToolBench with Qwen3.5-2B. Shaded bands show min--max. 
\textbf{Left:} increasing the number of sampled negatives \(K\) improves performance up to \(K=2\)--\(4\), after which the curve saturates. 
\textbf{Right:} Agentic-DPO is more data efficient than PPA+SFT, it has better performance and stability with low data.
}
\label{fig:scaling_up}
\end{figure}

We study two scaling axes on StableToolBench with Qwen3.5-2B: the number of one-step sampled negatives per expert state, and the fraction of expert trajectories used for training. 
The \(K\)-scaling curve in Figure~\ref{fig:scaling_up} (left) saturates quickly. 
Moving from \(K=1\) to \(K=2\)--\(4\) improves performance, but larger \(K\) gives little extra gain. This suggests that the main benefit comes from finding one plausible local mistake at each expert state, rather than from approximating a large rollout distribution. 
The data-scaling curve in Figure~\ref{fig:scaling_up} (right) shows a complementary effect: Agentic-DPO trained on only \(25\%\) of the expert trajectories already matches PPA+SFT trained on \(100\%\) of the data. And Agentic-DPO has stable performance across all data sizes, while PPA+SFT is unstable with lower than 50\% data.

\subsection{Training Dynamics: Negative-Refresh Rounds}
\label{sec:refresh_dynamics}

A refresh round resamples one-step negatives at expert states using the current student and then continues training with the same anchored objective. 
We sweep \(R\in\{0,1,\ldots,5\}\), where \(R=0\) is PPA+SFT without Agentic-DPO.
Figure~\ref{fig:refresh_dynamics} shows that the useful number of refresh rounds depends on model scale. 
The 2B model benefits from several refreshes but begins to overfit after its peak. 
The 4B model does not show the same clear overfitting pattern, but it converges more slowly. 
The 9B model converges fastest, one refresh round is enough to reach its best performance, and additional rounds mostly plateau. 
This pattern suggests that larger students need fewer negative-refresh rounds because their first set of one-step negatives is already close to the useful decision boundary.

\begin{figure}[t]
\centering
\includegraphics[width=\linewidth]{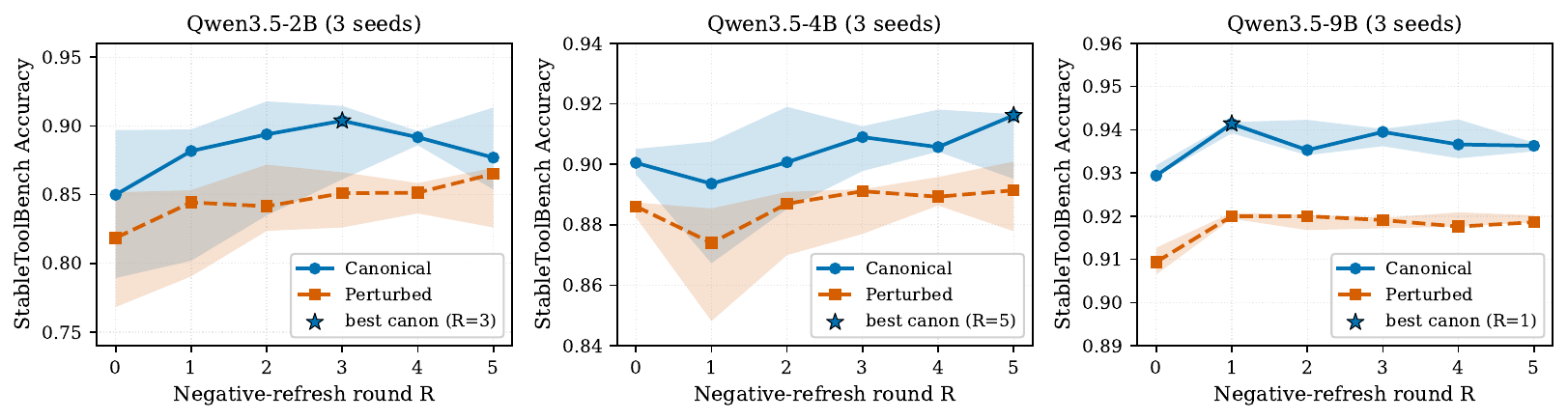}
\caption{
Effect of negative-refresh rounds \(R\) on StableToolBench for Qwen3.5-2B/4B/9B. 
Lines show per-round medians over seeds, and shaded bands show min--max. 
\(R=0\) is PPA+SFT only. 
Smaller models benefit from more refreshes, while the 9B model reaches its best performance after one refresh round.
}
\label{fig:refresh_dynamics}
\end{figure}

%% ============================================================
\section{Discussion}
\label{sec:discussion}
%% ============================================================

\textbf{Training cost.}
Agentic-DPO is more expensive than SFT, but much cheaper than online RL.
On $\tau$-bench with Qwen3.5-9B, taking SFT's per-step wall time as $1\times$, Agentic-DPO costs $1.6\times$ per step (two forwards on expert and negative per step), while GRPO costs $13.6\times$ per step because every gradient step requires fresh multi-turn rollouts.
Thus Agentic-DPO recovers a student-aware training signal while avoiding the dominant cost of RL: repeated full-trajectory environment rollouts.

\textbf{Limitations.}
Agentic-DPO is still an offline method. 
It constructs preferences only at states that already appear in the expert trajectories, so it cannot discover new states induced by the student's own long-horizon behavior. 
This is the main difference from online RL.
Agentic-DPO instead trades off this feedback for lower cost and simpler training. 
As a result, it is best suited when expert trajectories cover the important decision states, while tasks that require exploration far beyond the expert data may still benefit from online environment interaction.

%% ============================================================
\section{Conclusion}
\label{sec:conclusion}
%% ============================================================

We presented Agentic-DPO, an offline policy optimization method that converts each expert action step into a state-conditioned preference by contrasting the expert action with a one-step negative sampled from the current student, stabilized by an SFT anchor and Policy-Preserving Augmentation. Across StableToolBench, $\tau$-bench retail, and Mind2Web, Agentic-DPO improves small agents beyond SFT and strong RL/preference baselines without environment rollouts, reward-model training, or full-trajectory exploration, suggesting expert trajectories can support low-cost agentic policy optimization when their state-action structure is used directly.

%% ============================================================
%% References
%% ============================================================

\bibliographystyle{plainnat}
\bibliography{AgenticDPO}

%% ============================================================
%% Appendix
%% ============================================================
\appendix

\section{Expert Trajectory Sources}
\label{app:expert_trajectories}

Each benchmark's expert traces and the resulting Agentic-DPO training pool are summarized in Table~\ref{tab:expert_sources}.

\begin{table}[!h]
\centering
\caption{Expert trajectory sources and the resulting Agentic-DPO training pool.}
\label{tab:expert_sources}
\renewcommand{\arraystretch}{1.0}
\small
\begin{tabular}{lll}
\toprule
\textbf{Benchmark} & \textbf{Source} & \textbf{Training pool (canonical pairs)} \\
\midrule
StableToolBench~\citep{stabletoolbench} & ToolBench traces~\citep{toolbench} & $3{,}786$ \\
$\tau$-bench retail~\citep{taubench}    & released retail trajectories       & $3{,}786$ \\
Mind2Web~\citep{mind2web}               & released static action sequences   & $7{,}362$ \\
BFCL-v3 (OOD eval)~\citep{bfcl}         & ToolACE corpus~\citep{toolace} (training only) & ToolACE pool \\
\bottomrule
\end{tabular}
\end{table}

For each source, the expert action at a step is the ground-truth action shipped with the dataset: a tool call from the released trajectory (StableToolBench, $\tau$-bench, ToolACE) or a \texttt{CLICK}/\texttt{TYPE}/\texttt{SELECT} on a target element (Mind2Web). $\tau$-bench dialog turns without a tool call are mapped to a \texttt{respond} action, matching the released $\tau$-bench evaluator. BFCL-v3 is used only at evaluation time.

\section{Proofs and Supporting Analysis}
\label{app:proofs}
\label{app:theory}

This appendix provides the full statements and proofs of the two local results in Section~\ref{sec:theory}: (i) at initialization, the step-level contrastive gradient is collinear with the SFT gradient in logit space (Proposition~\ref{thm:local_alignment}); and (ii) a single anchored Agentic-DPO step produces a first-order decrease of the expert KL (Corollary~\ref{cor:local_kl}). We then extend the statement to the hard-negative rule used by Agentic-DPO.

\subsection{Local analysis: setup and statements}
\label{app:statements:local}

This analysis uses a latent-action softmax abstraction. 
It treats each rendered action as the surface form of a latent action and writes the DPO coefficient as a fixed positive scalar $\beta$. 
In the implementation, this scalar is replaced by the length-scaled $\beta_{\mathrm{eff}}$ in Eq.~\eqref{eq:length_scaled_beta}; for a fixed action pair, this only changes the local gradient scale and not its direction.

Fix an expert decision state $s$ with finite action set $\mathcal{A}(s)$. Let $z_\theta(s)\in\mathbb{R}^{|\mathcal{A}(s)|}$ denote the pre-softmax logits and $\pi_\theta(a\mid s)=\mathrm{softmax}(z_\theta(s))_a$. We write $e_a$ for the standard basis vector of action $a$. Let $\pi_{\mathrm{ref}}$ be the frozen reference policy, and assume initialization satisfies
\[
\pi_{\theta_0}(\cdot\mid s) \;=\; \pi_{\mathrm{ref}}(\cdot\mid s).
\]
Let $q_s$ denote the expert positive-action distribution at $s$ (a Dirac $\delta_{a^*(s)}$ in the deterministic trajectory setting). For each state we sample a positive $a^+\sim q_s$ and a negative $a^-\sim\pi_{\mathrm{ref}}(\cdot\mid s)$ \emph{independently}; we assume $\pi_{\mathrm{ref}}(a\mid s)>0$ on $\mathrm{supp}(q_s)$.

Define the reference-subtracted margin
\[
\Delta_\theta(s,a^+,a^-)
\;\triangleq\;
\bigl[z_\theta(a^+\!\mid\!s)-z_\theta(a^-\!\mid\!s)\bigr]
-\bigl[z_{\mathrm{ref}}(a^+\!\mid\!s)-z_{\mathrm{ref}}(a^-\!\mid\!s)\bigr],
\]
the per-pair contrastive loss $\mathcal{L}_{\mathrm{con}}(\theta;s,a^+,a^-)=-\log\sigma(\beta\,\Delta_\theta(s,a^+,a^-))$, and the per-state expected losses
\[
\bar{\mathcal{L}}_{\mathrm{con}}(\theta;s)
\;\triangleq\;\mathbb{E}_{a^+\sim q_s,\,a^-\sim\pi_{\mathrm{ref}}}\!\bigl[\mathcal{L}_{\mathrm{con}}\bigr],
\qquad
\bar{\mathcal{L}}_{\mathrm{SFT}}(\theta;s)
\;\triangleq\;\mathbb{E}_{a^+\sim q_s}\!\bigl[-\log\pi_\theta(a^+\!\mid\!s)\bigr].
\]
The joint anchored Agentic-DPO objective is $\bar{\mathcal{L}}_{\mathrm{train}}(\theta;s)\triangleq\bar{\mathcal{L}}_{\mathrm{con}}(\theta;s)+\lambda\bar{\mathcal{L}}_{\mathrm{SFT}}(\theta;s)$ with $\lambda>0$, matching Eq.~\eqref{eq:agentic_dpo_anchor} in the main paper.

\paragraph{Restatement of Proposition~\ref{thm:local_alignment} (logit-space alignment at initialization).}
Under the setup above, at $\pi_{\theta_0}=\pi_{\mathrm{ref}}$ we have
\[
\nabla_{z_{\theta_0}(s)}\bar{\mathcal{L}}_{\mathrm{con}}(\theta_0;s)
\;=\;\tfrac{\beta}{2}\bigl(\pi_{\mathrm{ref}}(\cdot\!\mid\!s)-q_s\bigr),
\qquad
\nabla_{z_{\theta_0}(s)}\bar{\mathcal{L}}_{\mathrm{SFT}}(\theta_0;s)
\;=\;\pi_{\mathrm{ref}}(\cdot\!\mid\!s)-q_s,
\]
and therefore
\[
\nabla_{z_{\theta_0}(s)}\bar{\mathcal{L}}_{\mathrm{train}}(\theta_0;s)
\;=\;
\Bigl(\lambda+\tfrac{\beta}{2}\Bigr)\bigl(\pi_{\mathrm{ref}}(\cdot\!\mid\!s)-q_s\bigr).
\]

\begin{corollary}[First-order decrease of expert KL]
\label{cor:local_kl}
Let $V_s(z)\triangleq\mathrm{KL}(q_s\,\Vert\,\pi_z(\cdot\!\mid\!s))$ and $z' = z_{\theta_0}(s) - \eta\,\nabla_{z_{\theta_0}(s)}\bar{\mathcal{L}}_{\mathrm{train}}(\theta_0;s)$ for sufficiently small $\eta>0$. Then
\[
V_s(z') \;=\; V_s(z_{\theta_0}(s))
-\eta\bigl(\lambda+\tfrac{\beta}{2}\bigr)\bigl\|\pi_{\mathrm{ref}}(\cdot\!\mid\!s)-q_s\bigr\|_2^2
+O(\eta^2).
\]
In particular, $V_s(z')<V_s(z_{\theta_0}(s))$ whenever $\pi_{\mathrm{ref}}(\cdot\!\mid\!s)\neq q_s$.
\end{corollary}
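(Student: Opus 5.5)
The plan is a first-order Taylor expansion of $V_s$ around $z_0\triangleq z_{\theta_0}(s)$, using the restated Proposition~\ref{thm:local_alignment} to identify the update direction. First I would compute the logit-space gradient of $V_s$. Writing $V_s(z)=\sum_a q_s(a)\log q_s(a)-\sum_a q_s(a)\log\pi_z(a\mid s)$, the first sum does not depend on $z$. The second is the softmax cross-entropy, exactly the per-state SFT loss viewed as a function of $z$. Hence $\nabla_z V_s(z)=\pi_z(\cdot\mid s)-q_s$, and at $z_0$ this equals $\pi_{\mathrm{ref}}(\cdot\mid s)-q_s$ because $\pi_{\theta_0}=\pi_{\mathrm{ref}}$. $V_s$ is finite and smooth everywhere, since the softmax is strictly positive for every finite $z$.

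Next I would substitute the update. By Proposition~\ref{thm:local_alignment}, $z'-z_0=-\eta(\lambda+\tfrac{\beta}{2})(\pi_{\mathrm{ref}}(\cdot\mid s)-q_s)$. Taylor's theorem with Lagrange remainder along the segment from $z_0$ to $z'$ gives
\[
V_s(z')=V_s(z_0)+\langle \nabla V_s(z_0),\,z'-z_0\rangle+\tfrac12 (z'-z_0)^\top \nabla^2 V_s(\xi)\,(z'-z_0)
\]
for some $\xi$ on that segment. The linear term equals $-\eta(\lambda+\tfrac{\beta}{2})\|\pi_{\mathrm{ref}}(\cdot\mid s)-q_s\|_2^2$, which is the claimed first-order term.

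The step needing care is controlling the remainder uniformly, so that the $O(\eta^2)$ claim and the strict decrease are rigorous. I would use the fact that the Hessian of softmax cross-entropy is $\nabla^2 V_s(z)=\mathrm{diag}(\pi_z)-\pi_z\pi_z^\top$. This is the covariance matrix of the one-hot vector $e_a$ under $a\sim\pi_z$. It is positive semidefinite, and its operator norm is at most its largest diagonal entry bound, $\max_a \pi_z(a)(1-\pi_z(a))\cdot 2\le 1$; the cruder trace bound $\le 1$ also suffices. This bound holds for every $z$, independently of $\xi$. Writing $c=\lambda+\tfrac{\beta}{2}$ and $D=\|\pi_{\mathrm{ref}}(\cdot\mid s)-q_s\|_2^2$, the remainder is therefore at most $\tfrac12\eta^2 c^2 D$.

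This gives the $O(\eta^2)$ statement with an explicit constant. It also gives
\[
V_s(z')-V_s(z_0)\le -\eta c D\bigl(1-\tfrac{\eta c}{2}\bigr),
\]
which is strictly negative whenever $D>0$ and $0<\eta<2/c$. This makes "sufficiently small $\eta$" concrete. Finally, $D>0$ holds exactly when $\pi_{\mathrm{ref}}(\cdot\mid s)\neq q_s$, which gives the last sentence of the corollary.
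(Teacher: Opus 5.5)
Your proposal is correct and follows the same route as the paper. You obtain $\nabla_z V_s(z_0)=\pi_{\mathrm{ref}}(\cdot\mid s)-q_s$ from the cross-entropy decomposition, use Proposition~\ref{thm:local_alignment} to write the anchored step as $-\eta(\lambda+\tfrac{\beta}{2})\nabla_z V_s(z_0)$, and then Taylor-expand; your Hessian bound $\nabla^2 V_s=\mathrm{diag}(\pi_z)-\pi_z\pi_z^\top\preceq I$ (via the trace, or more precisely via Gershgorin, which gives $\lambda_{\max}\le 2\max_a\pi_z(a)(1-\pi_z(a))\le\tfrac12$) adds rigor the paper omits, turning ``sufficiently small $\eta$'' into the explicit condition $0<\eta<2/(\lambda+\tfrac{\beta}{2})$.
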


The hard-negative variant of Corollary~\ref{cor:local_kl} (Proposition~\ref{prop:hardneg}) is stated and proved in \S\ref{app:proof:hardneg}.

\subsection{Proof of Corollary~\ref{cor:local_kl}}
\label{app:proof:kl}

\begin{proof}
$V_s(z)=\mathrm{KL}(q_s\,\|\,\pi_z)$ differs from the cross-entropy $-\sum_a q_s(a)\log\pi_z(a\mid s)$ by a $z$-independent constant, so by the softmax identity $\nabla_z V_s(z)=\pi_z-q_s$. At initialization this is $\pi_{\mathrm{ref}}-q_s$. By Proposition~\ref{thm:local_alignment}, $\nabla_{z_0}\bar{\mathcal{L}}_{\mathrm{train}}=(\lambda+\beta/2)\nabla_z V_s(z_0)$. A first-order Taylor expansion of $V_s$ along $z' = z_0 - \eta\nabla_{z_0}\bar{\mathcal{L}}_{\mathrm{train}}$ gives
\[
V_s(z') \;=\; V_s(z_0) - \eta\bigl(\lambda+\tfrac{\beta}{2}\bigr)\|\pi_{\mathrm{ref}}-q_s\|_2^2 + O(\eta^2),
\]
which is strictly below $V_s(z_0)$ for sufficiently small $\eta$ whenever $\pi_{\mathrm{ref}}\neq q_s$.
\end{proof}

\subsection{Hard-Negative Extension of Corollary~\ref{cor:local_kl}}
\label{app:proof:hardneg}

We now extend the first-order KL-decrease statement to the hard-negative rule used in Agentic-DPO. 
The main text constructs negatives by sampling $K$ candidates from the current student, removing invalid candidates and candidates equivalent to the expert action, and selecting the remaining candidate with the highest student log probability. 
For the local analysis, we study this rule at initialization and in the deterministic expert setting, which matches the expert trajectories used in our experiments.

Let \(q_s=\delta_{a^*(s)}\) be a point mass on the expert action. 
Let \(\tilde q_s^-\) denote the distribution of the selected hard negative after filtering, conditioned on the event that at least one valid non-expert candidate remains. 
By construction, \(\tilde q_s^-(a^*(s))=0\).

\begin{proposition}[Hard-negative gradient at initialization]
\label{prop:hardneg}
Assume \(q_s=\delta_{a^*(s)}\), \(\tilde q_s^-(a^*(s))=0\), and \(\pi_{\theta_0}=\pi_{\mathrm{ref}}\). 
Then
\[
\nabla_{z_{\theta_0}(s)}\bar{\mathcal{L}}_{\mathrm{con}}(\theta_0;s)
=
\tfrac{\beta}{2}
\bigl(
\tilde q_s^- - \delta_{a^*(s)}
\bigr).
\]
Moreover,
\[
\left\langle
\nabla_z V_s(z_{\theta_0}(s)),
\nabla_{z_{\theta_0}(s)}\bar{\mathcal{L}}_{\mathrm{con}}(\theta_0;s)
\right\rangle
=
\tfrac{\beta}{2}
\left(
1-\pi_{\mathrm{ref}}(a^*(s)\mid s)
+
\mathbb{E}_{a^-\sim \tilde q_s^-}
[
\pi_{\mathrm{ref}}(a^-\mid s)
]
\right)
\geq 0.
\]
Therefore, for the anchored objective
\(\bar{\mathcal{L}}_{\mathrm{train}}=\bar{\mathcal{L}}_{\mathrm{con}}+\lambda\bar{\mathcal{L}}_{\mathrm{SFT}}\) with \(\lambda>0\), a sufficiently small gradient step produces a first-order decrease of \(V_s(z)=\mathrm{KL}(\delta_{a^*(s)}\Vert\pi_z(\cdot\mid s))\) whenever \(\pi_{\mathrm{ref}}(\cdot\mid s)\neq \delta_{a^*(s)}\).
\end{proposition}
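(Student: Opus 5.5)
The plan is a direct logit-space computation, the same one behind Proposition~\ref{thm:local_alignment}, but with the negative distribution $\pi_{\mathrm{ref}}(\cdot\mid s)$ replaced by the filtered hard-negative distribution $\tilde q_s^-$. Two modelling points must be fixed before differentiating.

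First, the negatives are drawn from a frozen snapshot: $\theta_{\mathrm{student}}=\theta_0$ at initialization. So $\tilde q_s^-$ does not depend on the variable $z$, and no gradient flows through the sampling, filtering, or $\arg\max$ selection. The expectation therefore commutes with $\nabla_z$.

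Second, the reference-subtracted margin $\Delta_\theta$ is written in logits. Because the log-partition function cancels in $\log\pi(a^+\mid s)-\log\pi(a^-\mid s)$, it equals the log-probability margin of Eq.~\eqref{eq:agentic_dpo}, with $\beta_{\mathrm{eff}}$ replaced by the scalar $\beta$.

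\textbf{Step 1: the contrastive gradient.} For a single pair, $\nabla_z\Delta = e_{a^+}-e_{a^-}$. The derivative of $-\log\sigma(x)$ is $-(1-\sigma(x))$, so
\[
\nabla_z \mathcal{L}_{\mathrm{con}} = -\beta\bigl(1-\sigma(\beta\Delta)\bigr)\,(e_{a^+}-e_{a^-}).
\]
At $\pi_{\theta_0}=\pi_{\mathrm{ref}}$ we have $\Delta=0$ and $\sigma(0)=1/2$, giving $-\tfrac{\beta}{2}(e_{a^+}-e_{a^-})$. We then take expectations with $a^+=a^*(s)$ almost surely and $a^-\sim\tilde q_s^-$, using $\mathbb{E}[e_{a^-}]=\tilde q_s^-$. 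This yields the claimed $\tfrac{\beta}{2}(\tilde q_s^- - \delta_{a^*(s)})$.

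If the conditioning event has probability zero, the state is skipped and the contrastive gradient vanishes. I would note this degenerate case separately; the final conclusion still holds because the SFT term alone drives the decrease.

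\textbf{Step 2: the inner product.} As in the proof of Corollary~\ref{cor:local_kl}, $\nabla_z V_s(z_0)=\pi_{\mathrm{ref}}-\delta_{a^*}$. Expanding
\[
\bigl\langle \pi_{\mathrm{ref}}-\delta_{a^*},\ \tilde q_s^- - \delta_{a^*}\bigr\rangle
\]
gives four terms:
\begin{itemize}
\item $\langle \pi_{\mathrm{ref}},\tilde q_s^-\rangle=\mathbb{E}_{\tilde q_s^-}[\pi_{\mathrm{ref}}(a^-\mid s)]$;
\item $-\pi_{\mathrm{ref}}(a^*\mid s)$;
\item $-\tilde q_s^-(a^*)=0$, by the filtering assumption;
\item $+1$.
\end{itemize}
Multiplying by $\beta/2$ gives the stated formula. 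Nonnegativity follows because $\pi_{\mathrm{ref}}(a^*\mid s)\le 1$ and the expectation term is nonnegative.

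\textbf{Step 3: the anchored step.} The anchored gradient is
\[
g=\tfrac{\beta}{2}\bigl(\tilde q_s^- - \delta_{a^*}\bigr)+\lambda\bigl(\pi_{\mathrm{ref}}-\delta_{a^*}\bigr),
\]
where the second term comes from the SFT part of Proposition~\ref{thm:local_alignment}. Its inner product with $\nabla_z V_s(z_0)$ is at least $\lambda\|\pi_{\mathrm{ref}}-\delta_{a^*}\|_2^2$, which is strictly positive when $\pi_{\mathrm{ref}}\neq\delta_{a^*}$. A first-order Taylor expansion of the smooth function $V_s$ along $z'=z_0-\eta g$ then gives $V_s(z')=V_s(z_0)-\eta\langle\nabla V_s,g\rangle+O(\eta^2)$, which is a strict decrease for small $\eta$.

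\textbf{Main obstacle.} The only genuinely delicate point is Step~1's treatment of $\tilde q_s^-$ as a $z$-independent distribution. Unlike Proposition~\ref{thm:local_alignment}, the hard-negative rule is a nonsmooth, data-dependent selection. I would justify this via the algorithm: negatives are produced from $\theta_{\mathrm{student}}$ at the start of the refresh round and stored in $\mathcal{D}_{\mathrm{pref}}^{(r)}$ before any gradient step. Everything else is routine softmax calculus.

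Unlike the i.i.d.\ case, collinearity with $\nabla V_s$ is lost. The argument therefore relies only on the sign of the inner product, plus the strictly positive SFT contribution, rather than on an exact multiple of $\|\pi_{\mathrm{ref}}-q_s\|^2$.
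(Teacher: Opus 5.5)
Your proposal is correct and follows the paper's proof essentially step for step. The steps are the same: the pointwise identity $\nabla_z\mathcal{L}_{\mathrm{con}}=\tfrac{\beta}{2}(e_{a^-}-e_{a^*})$ at zero margin, averaging over $\tilde q_s^-$, the four-term expansion of the inner product using $\tilde q_s^-(a^*)=0$, and the strictly positive SFT contribution $\lambda\|\pi_{\mathrm{ref}}-\delta_{a^*}\|_2^2$ feeding a first-order Taylor expansion; your remark that $\tilde q_s^-$ is fixed by the frozen snapshot $\theta_{\mathrm{student}}=\theta_0$, so no gradient flows through the selection, makes explicit an assumption the paper's proof uses only implicitly.
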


\begin{proof}
At initialization \(\pi_{\theta_0}=\pi_{\mathrm{ref}}\), the reference-subtracted margin is zero for every selected pair. 
Thus the same pointwise identity used in the proof of Proposition~\ref{thm:local_alignment} applies:
\[
\nabla_{z_{\theta_0}(s)}\mathcal{L}_{\mathrm{con}}
=
-\tfrac{\beta}{2}
(e_{a^*}-e_{a^-})
=
\tfrac{\beta}{2}
(e_{a^-}-e_{a^*}).
\]
Taking expectation over \(a^-\sim\tilde q_s^-\) gives
\[
\nabla_{z_{\theta_0}(s)}\bar{\mathcal{L}}_{\mathrm{con}}(\theta_0;s)
=
\tfrac{\beta}{2}
\bigl(
\tilde q_s^- - \delta_{a^*(s)}
\bigr).
\]

For \(V_s(z)=\mathrm{KL}(\delta_{a^*(s)}\Vert\pi_z(\cdot\mid s))\), we have
\[
\nabla_z V_s(z_{\theta_0}(s))
=
\pi_{\mathrm{ref}}(\cdot\mid s)-\delta_{a^*(s)}.
\]
Therefore
\begin{align*}
\left\langle
\pi_{\mathrm{ref}}-\delta_{a^*},
\tfrac{\beta}{2}(\tilde q_s^- - \delta_{a^*})
\right\rangle
&=
\tfrac{\beta}{2}
\left(
\langle \pi_{\mathrm{ref}},\tilde q_s^-\rangle
-
\pi_{\mathrm{ref}}(a^*\mid s)
-
\tilde q_s^-(a^*)
+
1
\right) \\
&=
\tfrac{\beta}{2}
\left(
1-\pi_{\mathrm{ref}}(a^*\mid s)
+
\mathbb{E}_{a^-\sim \tilde q_s^-}
[
\pi_{\mathrm{ref}}(a^-\mid s)
]
\right)
\geq 0,
\end{align*}
where the equality uses \(\tilde q_s^-(a^*)=0\). 
The SFT part adds
\[
\lambda
\left\|
\pi_{\mathrm{ref}}(\cdot\mid s)-\delta_{a^*(s)}
\right\|_2^2
\geq 0
\]
to the inner product with the KL gradient, and it is strictly positive whenever \(\pi_{\mathrm{ref}}(\cdot\mid s)\neq\delta_{a^*(s)}\). 
A first-order Taylor expansion of \(V_s\) along the negative anchored-gradient direction then gives the claimed decrease.
\end{proof}

This proposition is intentionally stated for deterministic expert actions. 
For stochastic expert distributions, the same conclusion requires additional conditions on how the hard negative distribution relates to the support of \(q_s\), which we do not need for the trajectory setting studied in this paper.

\subsection{On the Choice of $\beta$}
\label{app:proof:autobeta}

All main experiments use a single fixed $\beta=0.008$, combined with the length-scaled coefficient $\beta_{\mathrm{eff}}(a^+,a^-)=\beta/\max(|a^+|,|a^-|)^{\alpha}$ at $\alpha=0.5$ (Eq.~\ref{eq:length_scaled_beta}). The same value is used across Qwen3.5-2B/4B/9B, Gemma3-4B, and the three benchmarks. From Proposition~\ref{thm:local_alignment}, the per-sample contrastive gradient at initialization has norm $\|\nabla_z\mathcal{L}_{\mathrm{con}}\|=\beta/\sqrt{2}$ (when $a^+\!\neq\!a^-$); with $\lambda=0.5$, this contrastive update is an order of magnitude below the SFT-anchor gradient, consistent with the SFT anchor playing a stabilizing rather than competing role.

\section{Construction of the StableToolBench-Perturbed Evaluation Set}
\label{app:stb_perturbed}

The ``Pert.'' columns in Table~\ref{tab:main_results} and Table~\ref{tab:ablation} are evaluated on a perturbed variant of StableToolBench. This appendix documents its construction and reports per-operator accuracy.

\paragraph{Perturbation classes.}
We define nine perturbation operators that act on a StableToolBench instance (system prompt + tool list + user query) without changing the latent decision needed to answer the query. Each operator is parameterized by a difficulty level \(L\!\in\!\{1,2,3\}\); a higher level applies a more aggressive transformation. Concretely, the nine operators are:
\begin{enumerate}
    \item \texttt{rename\_tool} -- rename each tool's \texttt{name} field. L1: case changes only; L2: synonym substitution; L3: aggressive rewrite.
    \item \texttt{rename\_fields} -- rename argument fields. L1: case changes; L2: synonyms; L3: synonyms plus decoy fields the model must ignore.
    \item \texttt{reorder\_args} -- shuffle argument order in the tool spec. L1: shuffle optional fields; L2: shuffle all fields and reverse tool list order; L3: merge required/optional and shuffle.
    \item \texttt{format\_switch} -- reword the system prompt and tool spec. L1: minor wording; L2: style shifts and restructure; L3: convert tool spec to OpenAI function-calling JSON, YAML, or prose.
    \item \texttt{verbosity} -- add filler context. L1: one extra sentence; L2: roughly double-length redundant context; L3: roughly five-times longer with misleading capability claims.
    \item \texttt{info\_position} -- move the relevant information within the prompt. L1: middle of the prompt; L2: end of the prompt with a long preamble; L3: bury the relevant information among unrelated content.
    \item \texttt{distractor} -- add unrelated tools. L1: 3 distractors; L2: 5 distractors from same/other categories; L3: 8 distractors with name similarity to the correct tool.
    \item \texttt{desc\_degrade} -- shorten or remove descriptions. L1: drop optional-argument descriptions; L2: drop all argument descriptions; L3: drop tool and argument descriptions.
    \item \texttt{alias\_rename} -- rename to opaque aliases. L1: light synonyms; L2: domain aliases; L3: opaque IDs (e.g., \texttt{tool\_17}, forcing pure description-based selection).
\end{enumerate}
The set is generated by the script in our code release with a single fixed seed \(\texttt{seed}\!=\!42\), and stored as one JSON file per (operator, level, original subset) combination. The full Cartesian product yields \(9\!\times\!3\!\times\!6\!=\!162\) files (the six original StableToolBench evaluation subsets G1\_category / G1\_instruction / G1\_tool / G2\_category / G2\_instruction / G3\_instruction). The aggregate ``Pert.'' number reported in the main text is the unweighted mean over the level-2 subset of these files, which we found gives a representative middle-of-distribution perturbation difficulty.

\paragraph{Relation to training-time PPA.}
StableToolBench Agentic-DPO is trained with the four-variant PPA set described in Appendix~\ref{app:ppa_examples} (\texttt{base}, \texttt{json}, \texttt{rename}, \texttt{combined}). Two evaluation operators overlap in spirit with these training views (\texttt{rename\_tool} matches \texttt{rename}; \texttt{format\_switch} matches \texttt{json}/\texttt{combined}); the other seven (\texttt{rename\_fields}, \texttt{reorder\_args}, \texttt{verbosity}, \texttt{info\_position}, \texttt{distractor}, \texttt{desc\_degrade}, \texttt{alias\_rename}) target classes outside training-time PPA, and the held-out comparison in Section~\ref{sec:robustness_holdout} aggregates exactly these seven.

\paragraph{Per-operator accuracy.}
Table~\ref{tab:perop_accuracy} reports per-operator accuracy at level $L=2$ for Qwen3.5-2B (mean over three seeds), separated into the two PPA-overlapping operators (\textit{In-class}) and the seven held-out operators.

\begin{table}[!h]
\centering
\caption{Per-operator perturbation accuracy at $L=2$ on StableToolBench-Perturbed for Qwen3.5-2B (mean over three seeds).}
\label{tab:perop_accuracy}
\renewcommand{\arraystretch}{0.9}
\footnotesize
\resizebox{\textwidth}{!}{
\begin{tabular}{lcc|ccccccc}
\toprule
& \multicolumn{2}{c|}{\textbf{In-class}} & \multicolumn{7}{c}{\textbf{Held-out}} \\
\cmidrule(lr){2-3}\cmidrule(lr){4-10}
Method & \texttt{rename\_tool} & \texttt{format\_switch} & \texttt{rename\_fields} & \texttt{reorder\_args} & \texttt{verbosity} & \texttt{info\_position} & \texttt{distractor} & \texttt{desc\_degrade} & \texttt{alias\_rename} \\
\midrule
SFT          & 54.1 & 52.3 & 53.6 & 55.7 & 53.2 & 51.8 & 49.2 & 52.7 & 49.4 \\
PPA+SFT      & 62.5 & 60.9 & 61.7 & 62.4 & 61.5 & 60.7 & 59.0 & 62.1 & 61.0 \\
\rowcolor{blue!8}
Agentic-DPO  & \textbf{90.4} & \textbf{88.7} & \textbf{88.2} & \textbf{87.9} & \textbf{87.1} & \textbf{86.4} & \textbf{84.5} & \textbf{87.6} & \textbf{86.6} \\
\bottomrule
\end{tabular}
}
\end{table}

\section{Policy-Preserving Augmentation: Worked Examples}
\label{app:ppa_examples}

This appendix gives concrete worked examples of each Policy-Preserving Augmentation variant on $\tau$-bench retail and Mind2Web, together with the recovery-context PPA used by both.

\subsection{Action-Rendering Variants}
\label{app:ppa_examples:action}

For each canonical pair $(s_t, a_t^+)$, we draw a view $\phi\!\sim\!\mathrm{Unif}(\Phi)$ from a four-element set $\Phi=\{\texttt{base},\texttt{json},\texttt{rename},\texttt{combined}\}$. The rendered state $s_t^\phi$ uses the corresponding tool spec and chat format; the rendered expert action $a_t^{+,\phi}$ is the same latent decision under the chosen schema. We illustrate by rendering one canonical $\tau$-bench retail pair and one canonical Mind2Web pair under all four views.

\paragraph{$\tau$-bench retail example.}
Canonical pair: expert tool \texttt{find\_user\_id\_by\_name\_zip} with arguments \texttt{\{"first\_name":"Yusuf","last\_name":"Rossi","zip":"19122"\}}.

\begin{itemize}
    \item \textbf{base} (ReAct, canonical tool names):
    \begin{verbatim}
Action:
{"name": "find_user_id_by_name_zip",
 "arguments": {"first_name": "Yusuf", "last_name": "Rossi", "zip": "19122"}}
    \end{verbatim}
    \item \textbf{json} (Hermes-style \verb|<tool_call>| tags, canonical tool names):
    \begin{verbatim}
<tool_call>
{"name": "find_user_id_by_name_zip",
 "arguments": {"first_name": "Yusuf", "last_name": "Rossi", "zip": "19122"}}
</tool_call>
    \end{verbatim}
    \item \textbf{rename} (ReAct, renamed tool from \texttt{TOOL\_RENAME\_MAP}):
    \begin{verbatim}
Action:
{"name": "lookup_customer_by_name_zip",
 "arguments": {"first_name": "Yusuf", "last_name": "Rossi", "zip": "19122"}}
    \end{verbatim}
    \item \textbf{combined} (Hermes \verb|<tool_call>| tags + renamed tool):
    \begin{verbatim}
<tool_call>
{"name": "lookup_customer_by_name_zip",
 "arguments": {"first_name": "Yusuf", "last_name": "Rossi", "zip": "19122"}}
</tool_call>
    \end{verbatim}
\end{itemize}
The state messages and tool spec headers used by the system prompt are rewritten consistently with each view, so the \texttt{rename} and \texttt{combined} variants present a tool spec with the renamed tool names; the model never sees the canonical name during a \texttt{rename}-rendered step.

\paragraph{Mind2Web example.}
Canonical pair: expert operation \texttt{SELECT} with arguments \texttt{\{"choice":"C","value":"Pickup"\}}.

\begin{itemize}
    \item \textbf{base} (ReAct, canonical operation names):
    \begin{verbatim}
Action:
{"name": "SELECT", "arguments": {"choice": "C", "value": "Pickup"}}
    \end{verbatim}
    \item \textbf{json} (Hermes-style \verb|<tool_call>| tags, canonical operation names):
    \begin{verbatim}
<tool_call>
{"name": "SELECT", "arguments": {"choice": "C", "value": "Pickup"}}
</tool_call>
    \end{verbatim}
    \item \textbf{rename} (ReAct, renamed operation):
    \begin{verbatim}
Action:
{"name": "pick_option", "arguments": {"choice": "C", "value": "Pickup"}}
    \end{verbatim}
    \item \textbf{combined} (Hermes \verb|<tool_call>| tags + renamed operation):
    \begin{verbatim}
<tool_call>
{"name": "pick_option", "arguments": {"choice": "C", "value": "Pickup"}}
</tool_call>
    \end{verbatim}
\end{itemize}

\paragraph{Rename maps.}
The renaming used by the \texttt{rename} and \texttt{combined} variants is a single fixed dictionary per benchmark, applied consistently across the tool spec and the expert action so that the latent decision is preserved.

\begin{itemize}
    \item $\tau$-bench retail: \texttt{TOOL\_RENAME\_MAP} renames the 16 retail tools (the dialog-only \texttt{respond} action is never renamed). A 5-of-16 slice:
    \begin{verbatim}
find_user_id_by_email        -> lookup_customer_by_email
find_user_id_by_name_zip     -> lookup_customer_by_name_zip
get_user_details             -> fetch_user_profile
get_order_details            -> fetch_order_info
get_product_details          -> fetch_product_info
    \end{verbatim}
    \item Mind2Web: the operation set is small (3 entries), and the rename is total:
    \begin{verbatim}
CLICK  -> tap
TYPE   -> input_text
SELECT -> pick_option
    \end{verbatim}
\end{itemize}

\subsection{Recovery-Context Variant}
\label{app:ppa_examples:recovery}

The recovery-context variant inserts an erroneous earlier action and its correction into the trajectory's \texttt{state\_messages} \emph{before} a randomly chosen expert step. The corrected step then becomes the positive at training time, simulating the situation in which the agent had previously taken a wrong action and must recover. The injected wrong action is always a legal tool from the trajectory's own toolset (never a hallucinated name), which avoids breaking the single-turn exact-match evaluation.

\paragraph{Worked example (state messages slice).}
Suppose the canonical trajectory contains the slice:
\begin{verbatim}
[2] assistant: Action: find_user
                Action Input: {"name": "Alice"}
[3] user:      Observation: {"user_id": "123"}
[4] assistant: Action: get_order
                Action Input: {"user_id": "123"}
\end{verbatim}
After applying recovery-context PPA at index 4, the trajectory becomes:
\begin{verbatim}
[2] assistant: Action: find_user
                Action Input: {"name": "Alice"}
[3] user:      Observation: {"user_id": "123"}
[4] assistant: Thought: Let me try using get_product...               [INJECTED]
                Action: get_product
                Action Input: {"user_id": "123"}
[5] user:      Observation: {"error": "Tool returned no results"}     [INJECTED]
[6] assistant: Thought: The previous tool call failed. Let me try
                a different approach.                                 [RECOVERY]
                Action: get_order
                Action Input: {"user_id": "123"}
\end{verbatim}
Indices 4 and 5 are the injected mistake and its observation; index 6 is the original expert action with a prepended recovery thought, now serving as the expert positive at this step. The latent decision (call \texttt{get\_order} with the user id from the previous step) is preserved.

\paragraph{Why this is policy-preserving.}
The decision the model is asked to make at the rendered step (call \texttt{get\_order} on \texttt{user\_id: 123}) is identical in the canonical and recovery-context views. The difference is only in the conversation history that precedes this state. As long as the injected wrong action draws from the trajectory's legal toolset, the recovery-context view does not change the set of valid expert actions at the current state; it only changes the surface context the model conditions on.

%% ============================================================
%% NeurIPS Paper Checklist (existing)
%% ============================================================

% \input{checklist}

\end{document}